\newtheorem{theorem}{Theorem}
\newtheorem{prop}{Proposition}
\begin{document}

    \title{
      Sliding Mode Control and Subspace Stabilization Methodology 
      for the Orbital Stabilization of
      Periodic Trajectories
    }

    \author{
      Maksim Surov,
      Leonid Freidovich
    }

    \maketitle
    \thispagestyle{empty}
    \pagestyle{empty}

    \begin{abstract}
      This paper presents a combined sliding-mode control and subspace stabilization methodology for orbital stabilization of periodic trajectories in underactuated mechanical systems with one degree of underactuation. The approach starts with partial feedback linearization and stabilization. Then, transverse linearization along the reference orbit is computed, resulting in a periodic linear time-varying system with a stable subspace. Sliding-mode control drives trajectories toward this subspace. The proposed design avoids solving computationally intensive periodic LQR problems and improves robustness to matched disturbances. The methodology is validated through experiments on the Butterfly robot.
    \end{abstract}


  \maketitle

  \section{Introduction}
    The problem of orbital stabilization of periodic trajectories has been addressed in a series of publications:~\cite{Banaszuk1995,Shiriaev2005,Maggiore2013,Finet2015,Saetre2021,Kant2020,Surov2020}.
    Many of these works, e.g.,~\cite{Banaszuk1995,Shiriaev2005,Finet2015,Surov2020}, employ the transverse linearization approach, which approximates the dynamics near a reference periodic orbit by a linear time-varying (LTV) system with periodic coefficients.
    As shown in~\cite{Shiriaev2005,Shiriaev2010}, a feedback designed to stabilize the trivial solution of this auxiliary LTV system can be used to construct a control law that stabilizes the orbit of the original nonlinear system. Under the mild assumption of controllability of the LTV system over one period, the LQR approach can be used to design the feedback. The practical effectiveness of this method was demonstrated in experiments with real robotic systems in~\cite{Freidovich2008,Manchester2011,Surov2015}.

    A substantially different stabilization method for the LTV system was proposed in~\cite{Saetre2021}, where the authors developed an alternative scheme combining Floquet theory with sliding-mode control. Following this line of work, we show that a specific feedback linearization of the transverse dynamics yields an LTV system endowed with a stable invariant subspace. In this setting, the control objective reduces to driving all trajectories into the stable subspace, which is achieved via sliding-mode-based control. This method does not require solving the computationally demanding periodic LQR problem. 
		
    The effectiveness of the proposed method is confirmed experimentally by stabilizing a periodic motion of the Butterfly robot, implemented on a real hardware platform.

  \section{Problem formulation}
    We consider a class of controlled mechanical systems described by
    the Euler-Lagrange equations
    \begin{equation}
      M(q)\ddot{q}+C\left(q,\dot{q}\right)\dot{q}+G(q)=F(q)u,\label{eq:mechanical-system}
    \end{equation}
    where $q\in\mathbb{R}^{2}$ denotes the generalized coordinates, $\dot{q}\equiv\frac{dq}{dt}$
    the generalized velocities, and $u\in\mathbb{R}$ the control input.
    The matrix $M(q)\in\mathbb{R}^{2\times2}$ is symmetric
    and positive definite, $C\left(q,\dot{q}\right)\in\mathbb{R}^{2\times2}$
    is linear in $\dot{q}$, $F(q)\in\mathbb{R}^{2\times1}$
    is a nonzero input vector. All functions are assumed to be continuously
    differentiable. The system is underactuated with underactuation
    degree one. 

    Let us introduce state variables $x\triangleq\left(q,\dot{q}\right)\in\mathbb{R}^{4}$ and
    suppose that, in the state-space, the system admits a non-trivial $T$-periodic trajectory
    \[
      x_{*}(t)\triangleq\bigl(q_{*}(t),\dot{q}_{*}(t)\bigr)\in\mathbb{R}^{4}
    \]
    associated with a virtual holonomic constraint (VHC) of the form $h(q)=0$,
    where $h\in C^{2}\left(\mathbb{R}^{2},\mathbb{R}\right)$ satisfies
    $\frac{\partial h(q)}{\partial q}\ne0$ for all $q$ in
    a neighborhood of $q_{*}(t)$, and $h\left(q_{*}(t)\right)\equiv0$ (see, e.g., \cite{Beghin1922,Urabe1967,Aoustin1999,Canudas2002}).
    We refer to 
    \[
      x_{*} \triangleq \left\{ x\in\mathbb{R}^{4}\mid x=x_{*}(t),\,t\in\mathbb{R}\right\} 
    \]
    as the orbit of the trajectory $x_{*}(t)$. The tubular
    $\varepsilon$-neighborhood of this orbit is defined as
    \[
      U_{\varepsilon} \triangleq \left\{ x\in\mathbb{R}^{4}\mid\mathrm{dist}\left(x,x_{*}\right)<\varepsilon\right\} .
    \]
    
    We address the problem of designing a feedback law $u\left(x\right)$
    that ensures the asymptotic orbital stability of the periodic trajectory $x_{*}(t)$ (see, e.g., \cite{Khalil2002,Canudas2002}). In contrast to the existing approaches proposed in~\cite{Shiriaev2005,Maggiore2013,Kant2020}, our solution is based on sliding-mode control methodology and further develops the ideas introduced in~\cite{FreidovichGusev2018,Saetre2021}.

  \subsection*{Notation}
    For a function $g:\mathbb{R}^{2}\to\mathbb{R}$ and a vector field
    $f:\mathbb{R}^{2}\to\mathbb{R}^{2}$, the Lie derivative of $g$ along
    $f$ is defined as $L_{f}g\equiv\frac{\partial g}{\partial q}f$.
    Higher-order Lie derivatives are defined recursively as $L_{f}^{2}g\equiv L_{f}L_{f}g$.
    
  \section{Orbital Stabilization Approach}
  \label{sec:Orbital-Stabilization-Approach}
    Before presenting the main result, we briefly outline the transverse linearization scheme for system~\eqref{eq:mechanical-system}, following the steps presented in~\cite{Shiriaev2005}.

  \subsection{Feedback Transformation for VHC Stabilization}
    For system~\eqref{eq:mechanical-system}, we introduce a new control variable $w$, defined as
    \begin{align}
    \label{eq:control-transform}
      w & \triangleq \frac{\partial h(q)}{\partial q}M^{-1}(q)\Bigl(-C\left(q,\dot{q}\right)\dot{q}-G(q)+F(q)u\Bigr) + \nonumber \\
        & \qquad L_{\dot{q}}^{2}h(q)
        +\nu_{1}h(q)+\nu_{2}L_{\dot{q}}h(q),
    \end{align}
    where $\nu_{1},\nu_{2}>0$ are design parameters. 
    Assuming that the VHC satisfies the regularity condition
    \[
      \frac{\partial h(q)}{\partial q}M^{-1}(q)F(q)\ne0
    \]
    for all $q$ in a neighborhood of $q_{*}(t)$, justifies validity of this transformation.
    After applying transformation~\eqref{eq:control-transform}, the dynamics~\eqref{eq:mechanical-system} can be written in the normal form
    \begin{equation}
    \label{eq:dynamics-normal-form}
      \dot{x}=f\left(x\right)+g\left(x\right)w.
    \end{equation}
    It is straightforward to verify that the function $x_{*}(t)$ is a solution of the unforced system
    \[
      \frac{d}{dt}x_{*}(t)\equiv f\bigl(x_{*}(t)\bigr).
    \]
    Moreover, the unforced system exhibits an important property stated in
    \begin{prop}\textit{
    \label{prop:stability-of-y}
      Consider the system
      \[
        \dot{x}=f\left(x\right)
      \]
      obtained from~\eqref{eq:dynamics-normal-form} by setting $w=0$
      and feedback transformation parameters $\nu_{1},\nu_{2}>0$. Define
      \[
        y\left(x\right) \triangleq \left(\begin{array}{c}
        h(q)\\
        L_{\dot{q}}h(q)
        \end{array}\right)\in\mathbb{R}^{2}.
      \]
      Then there exist a positive definite matrix $P\in\mathbb{R}^{2\times2}$
      and a constant $\alpha>0$ such that the quadratic form
      \[
        V_{y}\left(x\right) \triangleq y^{\top}\!(x)\,P\,y(x)
      \]
      decays exponentially along all solutions of the unforced system within
      $U_{\varepsilon}$, that is,
      \[
        \frac{d}{dt}V_{y}\bigl(x(t)\bigr) \leq -\alpha V_{y}\bigl(x(t)\bigr).
      \]
    }\end{prop}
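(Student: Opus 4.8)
The plan is to show that, along solutions of the unforced system $\dot{x}=f(x)$, the output $y$ obeys an \emph{exact} (not merely linearized) autonomous linear ODE, after which the statement reduces to standard Lyapunov theory for Hurwitz matrices. First I would differentiate the two components of $y$ along trajectories. By the chain rule, $\frac{d}{dt}h(q)=\frac{\partial h}{\partial q}\dot{q}=L_{\dot{q}}h(q)$, so $\dot{y}_{1}=y_{2}$ holds identically. Differentiating the second component and substituting $\ddot{q}=M^{-1}(-C\dot{q}-G+Fu)$ from the equations of motion gives
\[
  \frac{d}{dt}L_{\dot{q}}h(q)=L_{\dot{q}}^{2}h(q)+\frac{\partial h(q)}{\partial q}M^{-1}(q)\bigl(-C(q,\dot{q})\dot{q}-G(q)+F(q)u\bigr),
\]
where the first term is exactly $\dot{q}^{\top}\frac{\partial^{2}h}{\partial q^{2}}\dot{q}=L^2_{\dot q}h$.

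The key observation is that the right-hand side above is precisely the combination that appears in the definition~\eqref{eq:control-transform} of $w$, minus the corrective terms $\nu_{1}h(q)+\nu_{2}L_{\dot{q}}h(q)$. Hence $\dot{y}_{2}=w-\nu_{1}y_{1}-\nu_{2}y_{2}$. For the unforced system we set $w=0$, which, by the regularity condition $\frac{\partial h}{\partial q}M^{-1}F\neq0$, corresponds to a well-defined feedback $u$; this yields the closed linear system
\[
  \dot{y}=Ay,\qquad A=\begin{pmatrix}0 & 1\\ -\nu_{1} & -\nu_{2}\end{pmatrix}.
\]
I would emphasize that this decoupling is exact and independent of the remaining (transverse) coordinates: the transformation~\eqref{eq:control-transform} was engineered so that the $y$-subsystem becomes autonomous.

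Since $\nu_{1},\nu_{2}>0$, the characteristic polynomial $s^{2}+\nu_{2}s+\nu_{1}$ has strictly positive coefficients, so $A$ is Hurwitz. I would then invoke the converse Lyapunov theorem for linear systems: for any $Q\succ0$ (e.g.\ $Q=I$) there is a unique $P\succ0$ solving $A^{\top}P+PA=-Q$. Computing the derivative of $V_{y}=y^{\top}Py$ along $\dot{y}=Ay$ gives $\frac{d}{dt}V_{y}=y^{\top}(A^{\top}P+PA)y=-y^{\top}Qy\le-\lambda_{\min}(Q)\,\lvert y\rvert^{2}$, and bounding $\lvert y\rvert^{2}\ge V_{y}/\lambda_{\max}(P)$ yields the claimed estimate with $\alpha=\lambda_{\min}(Q)/\lambda_{\max}(P)>0$.

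I do not expect a genuine obstacle here: the entire content lies in verifying the cancellation of step two, i.e.\ confirming that $w$ is chosen precisely so that setting $w=0$ forces $\ddot{h}+\nu_{2}\dot{h}+\nu_{1}h=0$, and that $w=0$ is realizable under the stated regularity condition. Once this exact decoupling is established, the decay estimate holds throughout $U_{\varepsilon}$ wherever the transformation~\eqref{eq:control-transform} is defined, and the rest is routine linear Lyapunov analysis. The only point deserving care is to present $L_{\dot q}^2 h$ correctly as the Hessian quadratic form $\dot q^\top \tfrac{\partial^2 h}{\partial q^2}\dot q$ arising from the second differentiation, so that the match with~\eqref{eq:control-transform} is manifest.
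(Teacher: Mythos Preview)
Your argument is correct and follows the same approach as the paper: differentiate $y$, use the definition~\eqref{eq:control-transform} with $w=0$ to obtain the exact linear system $\dot y=\begin{pmatrix}0&1\\-\nu_1&-\nu_2\end{pmatrix}y$, and conclude exponential stability from the Hurwitz property. You are in fact more explicit than the paper in constructing $P$ and $\alpha$ via the Lyapunov equation $A^\top P+PA=-Q$; the paper simply asserts exponential stability once the eigenvalues are seen to have negative real parts.
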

		
    See proof in Appendix~\ref{sec:proof-stability-of-y}.
    
    In what follows, we demonstrate that Proposition~\ref{prop:stability-of-y}
    endows a specific structure in the linearization of dynamics~\eqref{eq:dynamics-normal-form} along a periodic orbit.
    
  \subsection{Transverse Dynamics}
  \label{sub:transverse-dynamics}
    Consider a coordinate transformation
    \[
      \tau = \tau(x) \in \mathbb{R} \quad\text{and}\quad \xi = \xi(x) \in \mathbb{R}^3
    \]
    defined in $U_{\varepsilon}$, such that:
    \begin{enumerate}
      \item 
        The transverse coordinates $\xi$ vanish on the reference
        trajectory, i.e., $\xi\left(x_{*}(t)\right) = 0$ for all
        $t\in\mathbb{R}$.
      \item 
        The projection variable $\tau$, when evaluated on the reference trajectory, $\tau_*(t) = \tau\left(x_*(t)\right)$, satisfies
        \[
          \frac{d\tau_*(t)}{dt}\geq\mathrm{const}>0,\quad t\in\left[0,T\right)
        \]
        ensuring strict monotonic increase over one period. 
        Moreover, the function $\tau_*(t)$ extends continuously beyond the interval via
        \begin{equation}
        \label{eq:extending-tau}
          \tau_*(t + nT) = \tau_*(t) + n \, T_{\tau}\quad n\in\mathbb{Z},
        \end{equation}
        for some $T_{\tau}>0$.
      \item 
        The transformation $x \mapsto\left(\tau,\xi\right)$ is a diffeomorphism
        on $U_{\varepsilon}$.
    \end{enumerate}
    In the new coordinates, the dynamics~\eqref{eq:dynamics-normal-form}
    can be represented within $U_{\varepsilon}$ as (see, e.g.,~\cite{Leonov1995,Banaszuk1995,Finet2015}):
    \begin{equation}
    \label{eq:transverse-dynamics}
      \frac{d\xi}{d\tau}=A\left(\tau\right)\xi+B\left(\tau\right)w+o\left(\tau,\xi,w\right),
    \end{equation}
    where $A\left(\tau\right)\in\mathbb{R}^{3\times3}$ and $B\left(\tau\right)\in\mathbb{R}^{3\times1}$ are $T_{\tau}$-periodic matrix functions and $o\left(\tau,\xi,w\right)$
    collects bilinear, quadratic, and higher-order terms in $\xi$ and
    $w$. 
    A feedback law $w\left(\tau,\xi\right)$ that asymptotically stabilizes the trivial solution $\xi=0$ of the transverse dynamics~\eqref{eq:transverse-dynamics},
    together with the transformation~\eqref{eq:control-transform}, allows the construction of a control law $u\left(x\right)$ for the original nonlinear system~\eqref{eq:mechanical-system}. Using the arguments in~\cite{Leonov1998,Demidovich1968}, the resulting feedback $u\left(x\right)$ guarantees local asymptotic orbital stability of the periodic trajectory $x_{*}(t)$.
    
    A common approach in a series of publications,~\cite{Shiriaev2005,Freidovich2008,Shiriaev2010,Ahmed2013,Surov2015}, for stabilizing the system~\eqref{eq:transverse-dynamics} is to define $w\left(\tau,\xi\right)=K\left(\tau\right)\xi$,
    where the periodic feedback gains $K\left(\tau\right)\in\mathbb{R}^{1\times3}$ are computed as the solution of the LQR problem for the linearized system
    \begin{equation}
    \label{eq:linearized-transverse-dynamics}
      \frac{d\xi}{d\tau}=A\left(\tau\right)\xi+B\left(\tau\right)w,
    \end{equation}
    provided the system is controllable over one period.
    In contrast, we propose an alternative feedback design for system~\eqref{eq:linearized-transverse-dynamics} based on the sliding-mode control methodology. 
    While the LQR approach is applicable to a wide class of controllable linear systems, our method exploits specific properties of the matrix $A\left(\tau\right)$, which are discussed below.

  \subsection{Stable Floquet Subspace of Transverse Linearization}
    The property of dynamics~\eqref{eq:dynamics-normal-form} established
    in Proposition~\ref{prop:stability-of-y} implies the existence of
    a stable invariant subspace for the unforced linearized transverse
    dynamics~\eqref{eq:linearized-transverse-dynamics}, which is formally defined in
    \begin{prop}\textit{
    \label{prop:stable-subspace}
      Consider system~\eqref{eq:dynamics-normal-form}
      with $w=0$, assuming the conditions of Proposition~\ref{prop:stability-of-y} are satisfied. Let its transverse linearization be given by the linear time-varying system
      \begin{equation}
      \label{eq:unforced-linear-equation}
        \frac{d\xi}{d\tau}=A\left(\tau\right)\xi,
      \end{equation}
      where $A\left(\tau\right)\in\mathbb{R}^{3\times3}$ is $T_{\tau}$-periodic.
      Assume that the monodromy matrix of~\eqref{eq:unforced-linear-equation} is nondefective.
      Then, equation~\eqref{eq:unforced-linear-equation} admits two stable Floquet multipliers $\left|\mu_{1,2}\right|<1$,
      and there exist two linearly independent Floquet solutions $\xi=e_{1}\left(\tau\right)$
      and $\xi=e_{2}\left(\tau\right)$ associated with $\mu_{1}$ and $\mu_{2}$, respectively\footnote{
        The definition and properties of Floquet multipliers and solutions are given in~\cite{Yakubovich1975}.
      }.
      These solutions span the two-dimensional stable invariant subspace
      \[
        \mathcal{S}_{\tau}
        \triangleq
        \mathrm{span}\left\{e_{1}\left(\tau\right),e_{2}\left(\tau\right)\right\},
      \]
      defined in the extended phase space $\left(\tau,\xi\right)$.
    }\end{prop}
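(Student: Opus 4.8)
The plan is to exploit the fact that, with $w=0$, the pair $y=(h,L_{\dot q}h)^\top$ obeys an exact, autonomous, exponentially stable linear equation, and to transfer this stability onto two of the three Floquet multipliers of the transverse monodromy matrix. First I would record the structural identity underlying Proposition~\ref{prop:stability-of-y}: the construction behind the feedback~\eqref{eq:control-transform} gives the exact relation $w=\ddot h+\nu_2 L_{\dot q}h+\nu_1 h$, so that setting $w=0$ forces $\ddot h+\nu_2\dot h+\nu_1 h=0$ and hence $\dot y=\Lambda y$ with
\[
  \Lambda=\begin{pmatrix} 0 & 1 \\ -\nu_1 & -\nu_2 \end{pmatrix},
\]
a Hurwitz matrix whose eigenvalues $\lambda_{1,2}$ satisfy $\mathrm{Re}\,\lambda_{1,2}<0$ because $\nu_1,\nu_2>0$. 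Differentiating the exact flow relation $y\bigl(x(t)\bigr)=e^{\Lambda t}y\bigl(x(0)\bigr)$ with respect to initial conditions shows that the functional $\delta y:=Dy(x_*)\,\delta x$ built from the full variational equation is itself governed by $\dot{\delta y}=\Lambda\,\delta y$, so that $\delta y(t)=e^{\Lambda t}\delta y(0)$ exactly.

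Next I would pass to the transverse coordinates. Since $y$ vanishes on the orbit, in $(\tau,\xi)$ one has $y=\Pi(\tau)\xi+o(\xi)$ with a $T_\tau$-periodic matrix $\Pi(\tau)\in\mathbb{R}^{2\times3}$. The hypothesis $\partial h/\partial q\neq0$ makes $h$ and $L_{\dot q}h$ functionally independent, so $Dy$ has rank $2$; moreover $\dot x_*\in\ker Dy$ (because $y(x_*(t))\equiv0$), which both guarantees $\mathrm{rank}\,\Pi(\tau)=2$ and ensures that $Dy\cdot\delta x$ annihilates the flow-direction component of $\delta x$, making the identity $\delta y=\Pi(\tau)\xi$ unambiguous. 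Tracking one loop of the orbit and matching the two descriptions of the return map — the transverse variable advances by the monodromy $M_\tau:=\Phi(T_\tau)$ of~\eqref{eq:unforced-linear-equation}, while $\delta y$ advances by $e^{\Lambda T}$ over the corresponding time $T$ — then yields the intertwining relation
\[
  \Pi(0)\,M_\tau=e^{\Lambda T}\,\Pi(0).
\]

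From this relation the conclusion follows by linear algebra. The one-dimensional subspace $\ker\Pi(0)$ is $M_\tau$-invariant, and $M_\tau$ descends to the quotient $\mathbb{R}^3/\ker\Pi(0)\cong\mathbb{R}^2$ as exactly $e^{\Lambda T}$; hence the characteristic polynomial of $M_\tau$ factors, and two of its three Floquet multipliers equal $e^{\lambda_1 T}$ and $e^{\lambda_2 T}$, both of modulus $e^{(\mathrm{Re}\,\lambda_{1,2})T}<1$. These are the claimed $\mu_{1,2}$. Under the standing assumption that $M_\tau$ is nondefective, each $\mu_i$ admits an eigenvector, and standard Floquet theory then produces the periodically modulated solutions $e_i(\tau)$ associated with $\mu_i$, whose span is the two-dimensional stable invariant subspace $\mathcal{S}_\tau$.

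I expect the main obstacle to be making the intertwining relation rigorous, namely justifying that the time-$T$ contraction $e^{\Lambda T}$ of $\delta y$ and the $\tau$-period-$T_\tau$ monodromy $M_\tau$ of $\xi$ describe the same return map despite the reparametrization $t\leftrightarrow\tau$. This is resolved by observing that a single traversal of the orbit corresponds simultaneously to $t$ advancing by $T$ and $\tau$ by $T_\tau$, combined with the pointwise coordinate identity $\delta y=\Pi(\tau)\xi$ along the loop; the fact that $Dy$ kills the flow direction is what lets the transverse variable $\xi$ carry the relation without ambiguity. The only other places where the hypotheses enter are the rank-$2$ property of $\Pi$ and the $M_\tau$-invariance of $\ker\Pi(0)$, both of which I would verify explicitly.
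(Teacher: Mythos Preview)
Your argument is correct and structurally parallel to the paper's, but the key step is sharper. Both proofs introduce the linearization $\Pi(\tau)=J(\tau)=\left.\dfrac{\partial y}{\partial x}\right|_{x_*}\left.\dfrac{\partial x}{\partial\xi}\right|_{\xi=0}$, verify $\mathrm{rank}\,\Pi(\tau)=2$ from $\partial h/\partial q\neq0$, and then compare how $y$ and $\xi$ evolve over one return to the section. The difference is what is carried over the loop: the paper uses only the Lyapunov inequality of Proposition~\ref{prop:stability-of-y}, obtaining the quadratic-form bound $\Psi^{\top}J^{\top}PJ\Psi\preceq e^{-\alpha T}J^{\top}PJ$, and then tests it on eigenvectors of the monodromy $\Psi$, splitting into the cases $l_i\in\ker J$ (at most one, real) versus $l_i\notin\ker J$ (forcing $|\mu_i|<1$). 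You instead use the exact identity $\dot y=\Lambda y$ established in the proof of Proposition~\ref{prop:stability-of-y}, which upgrades the inequality to the intertwining relation $\Pi(0)M_\tau=e^{\Lambda T}\Pi(0)$; from this, the $M_\tau$-invariance of $\ker\Pi(0)$ and the identification of the quotient action with $e^{\Lambda T}$ give the factorization of the characteristic polynomial and in fact pin down $\mu_{1,2}=e^{\lambda_{1,2}T}$ explicitly, which the paper does not obtain. Your handling of the $t\leftrightarrow\tau$ reparametrization via the Poincar\'e return (one traversal is simultaneously $\Delta t=T+O(\xi_o)$ and $\Delta\tau=T_\tau$, with $\dot x_*\in\ker Dy$ killing the flow component) matches what the paper does implicitly when it writes $\xi_e=\Psi\xi_o+O(\xi_o^2)$ and $\Delta t=T+O(\xi_o)$.
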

		
    See proof in Appendix~\ref{sec:appendix-prop-2}.

    \begin{prop}\textit{
    \label{prop:definition-of-n}
      Consider two linearly independent solutions $e_1(\tau)$ and $e_2(\tau)$ of equation~\eqref{eq:unforced-linear-equation}.
      There exists a continuously differentiable unit vector function $n(\tau) :\, \mathbb{R} \to S^2$ such that 
      \[
        n^{\top}\!(\tau) e_1(\tau) = n^{\top}\!(\tau) e_2(\tau) = 0 \quad
        \text{and} \quad
        \Vert n(\tau) \Vert = 1 \quad \forall \, \tau.
      \]
      The function $n(\tau)$ satisfies the differential equation
      \begin{equation}
      \label{eq:ode-for-n}
        \frac{dn}{d\tau} = -\left(I - n\, n^\top\right) A^\top\!(\tau)\, n.
      \end{equation}
      This construction provides an alternative characterization of the invariant subspace $\mathcal{S}_\tau$ spanned by $e_1(\tau)$ and $e_2(\tau)$:
      \[
        \mathcal{S}_\tau \equiv \left\{ \xi \in \mathbb{R}^3 \mid n^\top(\tau) \xi = 0 \right\}.
      \]
    }\end{prop}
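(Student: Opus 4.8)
The plan is to construct $n(\tau)$ explicitly as a normalized cross product and then read off both the differential equation and the subspace characterization from elementary differential and linear-algebraic identities. First I would observe that the two solutions $e_1(\tau)$ and $e_2(\tau)$ of~\eqref{eq:unforced-linear-equation}, being linearly independent at one instant, remain linearly independent for every $\tau$: if $c_1 e_1(\tau_0)+c_2 e_2(\tau_0)=0$ for some $\tau_0$, then $c_1 e_1(\tau)+c_2 e_2(\tau)$ is a solution of the linear ODE vanishing at $\tau_0$, hence identically zero by uniqueness, forcing $c_1=c_2=0$. Consequently $e_1(\tau)\times e_2(\tau)\ne 0$ for all $\tau$, and I would set
\[
  n(\tau) \triangleq \frac{e_1(\tau)\times e_2(\tau)}{\left\Vert e_1(\tau)\times e_2(\tau)\right\Vert}.
\]
By construction $\Vert n(\tau)\Vert=1$ and $n^\top(\tau)e_1(\tau)=n^\top(\tau)e_2(\tau)=0$. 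Since $A(\tau)$ is continuous, $e_1,e_2\in C^1$; the cross product is then $C^1$ and, its norm being bounded away from zero, the normalization keeps $n\in C^1$. This simultaneously settles the subspace characterization: $n(\tau)\perp\mathrm{span}\{e_1(\tau),e_2(\tau)\}=\mathcal{S}_\tau$, so $\mathcal{S}_\tau\subseteq\{\xi\mid n^\top(\tau)\xi=0\}$; both are two-dimensional subspaces of $\mathbb{R}^3$ (the latter being the orthogonal complement of a line), whence they coincide.

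For the differential equation I would differentiate the defining relations. From $\Vert n\Vert^2=1$ I obtain $n^\top\dot n=0$, so $\dot n\perp n$. Differentiating $n^\top e_i=0$ and substituting $\dot e_i=A e_i$ yields $\dot n^\top e_i+n^\top A e_i=0$, i.e. $(\dot n + A^\top n)^\top e_i=0$ for $i=1,2$. Thus $\dot n + A^\top n$ is orthogonal to both $e_1$ and $e_2$; since $\{e_1,e_2,n\}$ is a basis of $\mathbb{R}^3$ and the orthogonal complement of $\mathcal{S}_\tau$ is $\mathrm{span}\{n\}$, we must have $\dot n + A^\top n=\lambda n$ for some scalar $\lambda(\tau)$. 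To fix $\lambda$ I would project onto $n$: taking the inner product with $n$ and using $n^\top\dot n=0$, $n^\top n=1$ gives $\lambda = n^\top A^\top n$. Substituting back,
\[
  \dot n = \lambda n - A^\top n = -\bigl(A^\top n - n\,(n^\top A^\top n)\bigr) = -\left(I-n\,n^\top\right)A^\top n,
\]
which is exactly~\eqref{eq:ode-for-n}.

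The computation is short, so the only real care goes into the well-posedness of $n$: establishing global linear independence of $e_1,e_2$ (hence a non-vanishing cross product) and the resulting $C^1$ regularity of the normalized vector. Once that is in place, the ODE follows purely from differentiating the orthogonality constraints together with the single projection step that determines $\lambda$. I expect no genuine obstacle beyond verifying that $\{e_1,e_2,n\}$ spans $\mathbb{R}^3$ at every $\tau$, which is precisely what licenses the decomposition $\dot n + A^\top n=\lambda n$ into the component along $n$.
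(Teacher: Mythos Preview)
Your proof is correct and follows essentially the same route as the paper: differentiate the orthogonality and unit-norm constraints, conclude that $\dot n + A^\top n$ lies in $\mathrm{span}\{n\}$, and project onto $n$ to identify the scalar coefficient. The only difference is that you supply an explicit construction of $n$ via the normalized cross product (together with the uniqueness argument ensuring $e_1,e_2$ stay independent), whereas the paper simply asserts that existence and differentiability of $n$ follow from the properties of $e_1,e_2$; your version is therefore slightly more self-contained.
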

    See proof in Appendix~\ref{proof:definition-of-n}.
    While the stable subspace $\mathcal{S}_\tau$ can be computed directly using Floquet theory, see~\cite{Yakubovich1975}, equation~\eqref{eq:ode-for-n} provides an alternative approach.
    In particular, if $|\mu_3| > \max(|\mu_1|, |\mu_2|)$, 
    then the solution of the initial value problem~\eqref{eq:ode-for-n} with \textit{ almost any} $n(0)\in S^2$ converges exponentially fast, when integrated backward in time, to the vector function $n(\tau)$ defining the stable invariant subspace $\mathcal{S}_\tau$.
    By \textit{almost any}, we mean that there exists a subset of measure zero on the sphere consisting of initial conditions for which convergence does not occur.
    This behavior follows directly from the properties of the adjoint linear system; a formal proof is omitted.

    The stable subspace $\mathcal{S}_{\tau}$ plays a key role in the construction of the feedback law. To stabilize the transverse variables $\xi$, and consequently the orbit $x_{*}$, it suffices to design a feedback law that drives $\xi$ toward this stable subspace. This principle underlies the control design presented in the following section.

  \subsection{Sliding-Mode Control for Transverse Linearization}
    Below, we state a theorem proposing a control law for a periodic LTV system satisfying the conditions of Proposition~\ref{prop:stable-subspace}.
    \begin{theorem}
    \label{thm:ltv-sliding-feedback}
    \textit{
      Consider the LTV system
      \begin{equation}
      \label{eq:ltv-system}
        \frac{d\xi}{d\tau} = A(\tau)\xi + B(\tau)w,
      \end{equation}
      where $A(\tau) \in \mathbb{R}^{3\times3}$, $B(\tau) \in \mathbb{R}^{3\times1}$ are continuous and $T_{\tau}$-periodic functions.
      Suppose that
      \begin{enumerate}
      \item 
        The unforced system
        \begin{equation}
        \label{eq:ltv-unforced}
          \frac{d\xi}{d\tau} = A(\tau)\xi
        \end{equation}
        admits a two-dimensional stable invariant subspace
        \[
          \mathcal{S}_{\tau}
          \triangleq\left\{ \xi\in\mathbb{R}^3\mid n^{\top}(\tau)\,\xi=0\right\},
        \]
        where $n\left(\tau\right):\mathbb{R}\to S^{2}$ is continuous. That is, 
        any solution originating at $\tau=\tau_o$ in $\xi(\tau_o) = \xi_{o}\in\mathcal{S}_{\tau_{o}}$
        remains in $\mathcal{S}_{\tau}$ for all $\tau\geq\tau_{o}$, and
        satisfies $\xi\left(\tau\right) \to 0$ as $\tau\to\infty.$
      \item 
        The scalar function
        \[
          b(\tau)\triangleq n^{\top}\!\left(\tau\right)B\left(\tau\right)
        \]
        has finitely many isolated zeros in $\left[0,T_\tau\right]$, and each
        zero $\tau_{z}$ is simple, i.e., 
        $\frac{d}{d\tau}b(\tau_z)\ne0$. 
      \end{enumerate}
      Define the scalar variable
      \[
        s \triangleq n^{\top}\!(\tau)\,\xi
      \]
      and consider the feedback law
      \begin{equation}
      \label{eq:sliding-feedback}
        w\bigl(\tau,\xi\bigr) =-\sigma\bigl(b(\tau)\bigr) \bigl(k_{1}\,\mathrm{sign}(s) +k_{2}\,s\bigr),
      \end{equation}
      where $k_{1}>0$,
      \begin{equation}
      \label{eq:inequality-for-k2}
        k_{2}>\frac{\int_{0}^{T_\tau}n^{\top}\!\left(\tau\right)A\left(\tau\right)n\left(\tau\right)dv}
        {\int_{0}^{T_\tau}b(\tau)\,\sigma\bigl(b(\tau)\bigr)\,dv}
      \end{equation}
      and $\sigma(b)\triangleq\frac{b}{\left|b\right|+\varepsilon}$ with some $\varepsilon>0$ is a sigmoid function\footnote{
        The choice $\sigma\left(b\right) = \frac{b}{\left|b\right|+\varepsilon}$ is not unique; any smooth, odd function with a single zero at $b=0$
        and strictly monotone near the origin can be used in feedback.
      }.
      Then, the origin $\xi=0$ of the closed-loop system~(\ref{eq:ltv-system},\ref{eq:sliding-feedback}) is globally asymptotically stable.
    }\end{theorem}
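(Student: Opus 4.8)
The plan is to treat this as a standard two-phase sliding-mode argument: a \emph{reaching phase} that drives the sliding variable $s=n^{\top}(\tau)\xi$ to zero, followed by a \emph{sliding phase} in which the motion is confined to the stable invariant subspace $\mathcal{S}_{\tau}$ and therefore decays to the origin. The crucial first observation is that the dynamics of $s$ decouple into a scalar equation. Differentiating $s=n^{\top}\xi$ and substituting the ODE~\eqref{eq:ode-for-n} from Proposition~\ref{prop:definition-of-n}, namely $\dot n=-(I-nn^{\top})A^{\top}n$, together with $\dot\xi=A\xi+Bw$, I expect the cross terms $n^{\top}A\xi$ to cancel, leaving
\begin{equation*}
  \frac{ds}{d\tau}=\bigl(n^{\top}(\tau)A(\tau)n(\tau)\bigr)\,s+b(\tau)\,w,
\end{equation*}
with $b(\tau)=n^{\top}(\tau)B(\tau)$ exactly the scalar of Assumption~2. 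This reduction is the linchpin: the single coordinate $s$ carries the unstable mode, and the feedback~\eqref{eq:sliding-feedback} acts on it through the channel $b(\tau)$.

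Second, I would analyze the reaching phase. Writing $a(\tau)\triangleq n^{\top}(\tau)A(\tau)n(\tau)$ and $\beta(\tau)\triangleq b(\tau)\sigma(b(\tau))=b^{2}/(|b|+\varepsilon)\ge0$, substituting the feedback gives, for $s\neq0$,
\begin{equation*}
  \frac{d|s|}{d\tau}=\bigl(a(\tau)-k_{2}\beta(\tau)\bigr)\,|s|-k_{1}\beta(\tau).
\end{equation*}
Since $-k_{1}\beta(\tau)\le0$, the comparison principle (phrased via upper Dini derivatives of $|s|$) yields $|s(\tau)|\le\Phi(\tau,0)\,|s(0)|$ with $\Phi(\tau,0)=\exp\!\int_{0}^{\tau}\bigl(a-k_{2}\beta\bigr)\,dv$. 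Evaluating over one period and invoking the gain condition~\eqref{eq:inequality-for-k2}, which is precisely $\int_{0}^{T_\tau}(a-k_{2}\beta)\,dv<0$, gives a contraction factor $\rho=\Phi(T_{\tau},0)\in(0,1)$. Iterating over periods, together with a uniform bound on $\Phi$ over $[0,T_{\tau}]$, then forces $s(\tau)\to0$ exponentially in the period-sampled sense.

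Third, for the sliding phase I would exploit the Floquet structure from Proposition~\ref{prop:stable-subspace}. The two stable multipliers furnish a periodic frame $\tilde E(\tau)=[\tilde e_{1}(\tau),\tilde e_{2}(\tau)]$ for $\mathcal{S}_{\tau}$ in which the intrinsic dynamics read $\dot\eta=\Lambda\eta$ with $\Lambda$ Hurwitz (eigenvalues $\tfrac{1}{T_{\tau}}\ln\mu_{1,2}$, $|\mu_{1,2}|<1$). Decomposing $\xi=\tilde E(\tau)\eta+s\,m(\tau)$ with $n^{\top}m=1$ and using the frame identity $\dot{\tilde E}=A\tilde E-\tilde E\Lambda$, differentiation produces $\dot\eta=\Lambda\eta+d(\tau)$, where the forcing $d(\tau)$ is linear in $s$, $\dot s$, and $w$. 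Because $s\to0$ and, once the trajectory is in sliding, the equivalent control satisfies $w\equiv0$ (with $\mathrm{sign}(0)=0$ the surface is invariant), one gets $d(\tau)\to0$; input-to-state stability of the Hurwitz system $\dot\eta=\Lambda\eta+d$ then gives $\eta\to0$, whence $\xi=\tilde E\eta+s\,m\to0$.

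I expect the main obstacle to be the set of zeros of $b(\tau)$ from Assumption~2, where the control loses authority over $s$ and the pointwise dynamics $\dot s=a(\tau)s$ may be expanding; the resolution is that stability is enforced only in the averaged sense, as the inequality~\eqref{eq:inequality-for-k2} guarantees net contraction $\rho<1$ across a full period despite loss of control at the isolated simple zeros. The secondary point requiring the most care is the passage from $s\to0$ to $\xi\to0$: one must verify that the term $k_{1}\beta\,\mathrm{sign}(s)$ does not inject persistent excitation into the $\eta$-subsystem. I would handle this by arguing that reaching occurs in finite time on any interval where $\beta$ is bounded away from zero (i.e.\ between consecutive zeros of $b$), so that for large $\tau$ the motion is genuinely in sliding, $w$ coincides with the vanishing equivalent control, and the residual dynamics are the exponentially stable internal dynamics on $\mathcal{S}_{\tau}$. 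Throughout, the discontinuity of $\mathrm{sign}(s)$ means solutions should be understood in the Filippov sense.
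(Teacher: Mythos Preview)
Your proposal is correct and shares the paper's two-phase sliding-mode structure. The reaching analysis is essentially identical: both derive the scalar equation $\dot s=(n^{\top}An)\,s+b\,w$ from Proposition~\ref{prop:definition-of-n}, substitute~\eqref{eq:sliding-feedback}, and use the averaged-contraction condition~\eqref{eq:inequality-for-k2} to force $|s|\to0$; the paper then goes further than you do by solving the scalar ODE explicitly (it is linear while $s$ keeps fixed sign) and extracting a closed-form upper bound on the reaching time $\tau_r$ in terms of $|s_o|$, $k_1$, and period integrals of $b\,\sigma(b)$, whereas your ``reaching between consecutive zeros of $b$'' remark is correct in spirit but would need that estimate to be complete. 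The sliding-phase arguments differ more substantively. The paper simply observes that on $\mathcal{S}_\tau$ the equivalent control is $w_{\mathrm{eq}}=0$ (verifying the degenerate case $b(\tau_z)=0$ via $\ddot s$ and the simple-zero hypothesis), so the on-surface motion is exactly the unforced system~\eqref{eq:ltv-unforced}, which is stable by Assumption~1 --- no further work needed. Your Floquet-frame decomposition $\xi=\tilde E(\tau)\eta+s\,m(\tau)$ with $\dot\eta=\Lambda\eta+d(\tau)$ and the ISS argument for $\eta$ is valid but redundant here: the theorem already \emph{assumes} asymptotic stability on $\mathcal{S}_\tau$, so once you have established $w_{\mathrm{eq}}=0$ and finite-time reaching you are done. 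What your decomposition does buy is an explicit handle on boundedness of $\xi$ during the reaching phase; the paper dispatches this in one line by noting that linear systems admit no finite-time escape.
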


    \begin{proof}
      We begin by computing the dynamics of the variable $s$, which represents the signed distance to the subspace $\mathcal{S}_\tau$, for~\eqref{eq:ltv-system}:
      \[
        \frac{ds}{d\tau} = \frac{dn^{\top}\!(\tau)}{d\tau}\xi+n^{\top}\!(\tau)A(\tau) \,\xi+b(\tau) \,w.
      \]
      Using properties of $n(\tau)$ established in Proposition~\ref{prop:definition-of-n}, this simplifies to
      \begin{equation}
      \label{eq:dot-s}
        \frac{ds}{d\tau} = n^{\top}\! (\tau) A(\tau) n(\tau) s + b(\tau)w.
      \end{equation}
      Under the control law~\eqref{eq:sliding-feedback}, the closed-loop dynamics take the form
      \begin{equation}
      \label{eq:sliding-dynamics}
        \frac{ds}{d\tau} = \left(n^{\top} A \, n - k_{2}b\sigma(b)\right)s - k_1 b \sigma(b) \mathrm{sign}\left(s\right)
      \end{equation}
      Hereafter, the argument $\tau$ is omitted for brevity.

      Let us demonstrate that a trajectory, originating at any $\xi(\tau_o)=\xi_o$, reaches $\mathcal{S}_\tau$ in finite time. Equivalently, the function $s=s\bigl(\tau,\xi(\tau)\bigr) = n^\top\!(\tau)\xi$, evaluated along the trajectory, attains the value $0$ in finite time.
      To this end, consider the evolution of the non-negative function $V(\tau)=s^2$:
      \begin{align*}
      \label{eq:slidingV-dynamics}
        \frac{dV}{d\tau} & =2\bigl(n^{\top}A\,n-k_{2}b\sigma(b)\bigr)V
         -2k_{1}b\sigma(b)\sqrt{V}\nonumber \\
          &\le 2\bigl(n^{\top}A\,n-k_{2}b\sigma(b)\bigr)V
      \end{align*}
      Let
      \[
        \alpha=k_{2}\int_{\tau_{o}}^{\tau_{o}+T_\tau}b\sigma(b)d\tau-\int_{\tau_{o}}^{\tau_{o}+T_\tau}n^{\top}An\,d\tau.
      \]
      Since $b(\tau)$ has finitely many isolated zeros over one period and $b(\tau)\sigma(b(\tau))\geq0$, it follows that
      \[
        \int_{\tau_{o}}^{\tau_{o}+T_\tau}b \sigma(b )d\tau>0.
      \]
      Using inequality~\eqref{eq:inequality-for-k2} for the coefficient $k_{2}$, we conclude that $\alpha>0$. Then, by the comparison principle (or the Grönwall-Bellman inequality), we obtain
      \[
        V(\tau_0+T_\tau)\le V(\tau_0)\,e^{-2\alpha T_\tau}.
      \]
      This inequality implies that $V$ and $|s|$ decay exponentially or faster.

      Consider the case $s_o = n^\top\!(\tau_o) \xi_o > 0$ (the case $s_o<0$ is treated similarly). 
      From the general solution of~\eqref{eq:sliding-dynamics}, we obtain that $s$ reaches zero at some $\tau_{r}$ satisfying
      \begin{align*}
        & \int_{\tau_{o}}^{\tau_{o}+\tau_{r}}b\,\sigma(b)\exp\left(\int_{\tau_{0}}^{v}k_{2}b\,\sigma(b)-n^{\top}\!A n\,dw\right)dv=\frac{\left|s_{o}\right|}{k_{1}}.
      \end{align*}
      If $k_2$ satisfies~\eqref{eq:inequality-for-k2}, then the integral 
      \[
        I\left(\tau_{o},\tau\right) \triangleq \int_{\tau_{o}}^{\tau}\left(k_{2}b\,\sigma(b)-n^{\top}\!A n\right)dw
      \]
      admits the lower bound:
      \[
        m \triangleq \inf_{\tau\geq\tau_{0}}I\left(\tau_{o},\tau\right)=\inf_{\tau\in\left[\tau_{0},\tau_{0}+T\right]}I\left(\tau_{0},\tau\right)\in\mathbb{R}.
      \]
      Therefore, the reaching time $\tau_{r}$ is majorated by the inequality
      \[
        \int_{\tau_{o}}^{\tau_{o}+\tau_{r}}b\,\sigma(b)\,dv\leq\frac{\left|s_{o}\right|}{k_{1}\exp\left(m\right)}.
      \]
      Setting $d \triangleq \int_{0}^{T}n^{\top}\left(b\,\sigma(b)\right)dv>0$, we obtain the explicit estimate:
      \begin{equation*}
      \label{eq:reaching-time}
        \tau_{r}\leq T_\tau\left(1+\left\lceil \frac{\left|s_{o}\right|}{k_{1}\exp\left(m\right)d}\right\rceil \right)<\infty.
      \end{equation*}

      Next, using the equivalent control principle, we show that the dynamics on $\mathcal{S}_\tau$ are governed by~\eqref{eq:ltv-unforced}  and are therefore stable by assumption~(1) of the theorem.
      If $w$ were continuously differentiable, we would have
      \begin{equation}
      \label{eq:ddot-s}
        \frac{d^{2}}{d\tau^{2}}s=\frac{d(n^{\top}An)}{d\tau}s+n^{\top}An\frac{ds}{d\tau}
        +\frac{db}{d\tau}w+b\frac{d}{d\tau}w.
      \end{equation}
      Equation~\eqref{eq:dot-s} implies that for any $\tau$ such that $s=0$, $\frac{d}{d\tau}s=0$, and $b(\tau)\ne0$ we have $w=0$. 
      Furthermore, from~\eqref{eq:ddot-s} we see that for any $\tau=\tau_z$ satisfying $s=0$, $\frac{d}{d\tau}s=0$, $\frac{d^2}{d\tau^2}s=0$, $b(\tau_z)=0$, and $\frac{d}{d\tau}b(\tau_z)\ne0$ we also obtain $w=0$. 
      Therefore, solutions of~\eqref{eq:ltv-system} with $w=0$ and initial conditions on the subspace~$S_\tau$ remain on the subspace and decay exponentially.

      Combining the finite-time reaching result with the invariance and asymptotic stability of the on-manifold dynamics, together with the standard fact that linear systems do not exhibit finite-time escape, yields global asymptotic stability of the origin (understanding solutions of the discontinuous differential equations defining the closed-loop system in the sense of Filippov, see~\cite{Utkin1992}).
      \qed
    \end{proof}

  \subsection{Control Law for Nonlinear System}
    The control law~\eqref{eq:sliding-feedback} designed for the transverse
    dynamics induces a control input for the original system~\eqref{eq:mechanical-system}
    through the inverse of the transformation~\eqref{eq:control-transform}:
    \begin{align}
    \label{eq:nonlinear-control}
      u\left(q,\dot{q},w\right)= & \frac{1}{\frac{\partial h}{\partial q}M^{-1}F}\biggl(w-\nu_{1}h-\nu_{2}L_{\dot{q}}h\\
       & \quad-L_{\dot{q}}^{2}h+\frac{\partial h}{\partial q}M^{-1}\left(C\dot{q}+G\right)\biggr), \nonumber
    \end{align}
    where the functions arguments are omitted for brevity.
    Although a full theoretical proof of orbital asymptotic stability
    for the nonlinear system with the proposed discontinuous feedback
    remains a subject of further rigorous analysis, the effectiveness
    of the method is illustrated by the experimental results presented
    in the following section.
    
  \section{Butterfly Robot Example}
    For demonstrative purposes, we consider the stabilization of a periodic motion of the Butterfly robot introduced in~\cite{Lynch1998}. The experimental platform represents a planar mechanical system consisting of a figure-eight-shaped link mounted on a fixed base through a single revolute joint. The link rotates in the vertical plane and is driven by a DC motor. A 
		spherical ball is placed on the link and can roll passively along its surface without direct actuation. The link's angular position is measured by an encoder, while a vision system estimates the ball's position in real time. This testbed is used to validate nonprehensile manipulation algorithms, including tasks such as transporting the ball between prescribed equilibrium points and sustaining stable periodic motions of the ball, see~\cite{Lynch1999}.

    \begin{figure}\begin{centering}
      \includegraphics[width=0.99\linewidth]{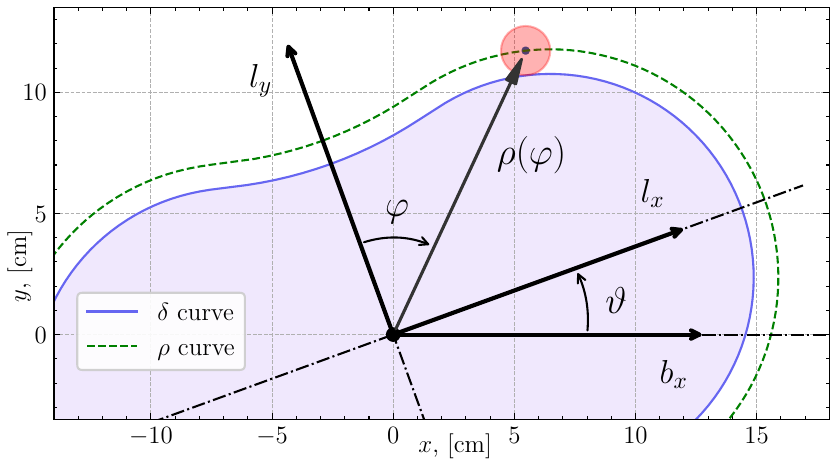}
      \par
      \caption{
        Kinematics of the Butterfly Robot. Definition of the generalized coordinates $q \equiv (\vartheta, \varphi)$.
      }
      \label{fig:butterfly-generalized-coordinates}
    \end{centering}\end{figure}

    Under standard modeling assumptions, including ideal ball rolling (see~\cite{Surov2015}), the equations of motion of the robot have the form
    \begin{equation}
    \label{eq:butterfly-dynamics}
      M(q)\ddot{q}+C\left(q,\dot{q}\right)\dot{q}+G(q)=F u,
    \end{equation}
    where the generalized coordinates $q\equiv\left(\vartheta,\varphi\right)$ define the angular position of the robot hand and position of the ball as illustrated in Fig.~\ref{fig:butterfly-generalized-coordinates}, the control signal $u$ represents the torque applied to the hand's revolute
    joint. 
    The matrix $M(q)\in\mathbb{R}^{2\times2}$ is invertible.
    The matrix $F \triangleq \left[1,0\right]^{\top}$ has a left annihilator $F_\perp \triangleq [0, 1]$.
    The exact expressions for the matrix functions $M(q),$ $C(q),$ $G(q)$ and the physical parameters of the robot are presented in~\cite{Surov2015}.

  \subsection{Periodic Trajectory}
    \begin{figure}\begin{centering}
      \includegraphics[width=0.99\linewidth]{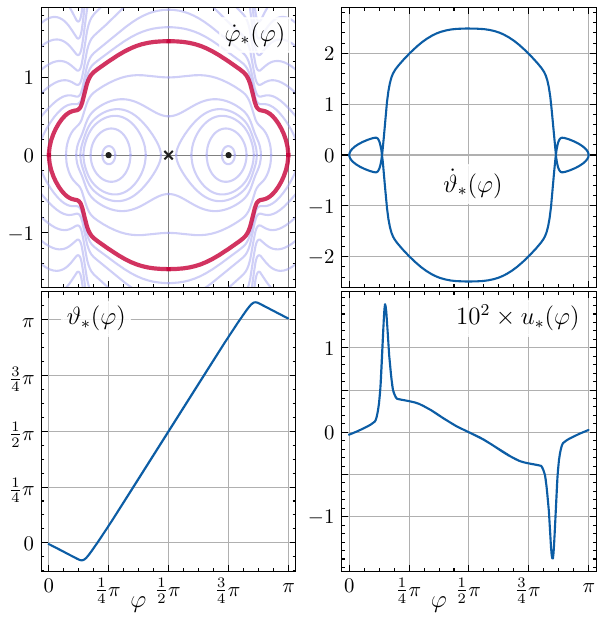}
      \par
      \caption{
        Phase portrait of the reduced dynamics; projections of the reference trajectory onto the phase planes; control signal along the reference trajectory.
      }
      \label{fig:butterfly-reference-trajectory}
    \end{centering}\end{figure}

    A periodic trajectory for the robot was obtained using the VHC approach, based on the results in~\cite{Shiriaev2006,LaHera2009,Surov2015}.
    Specifically, we define the VHC $h(q) \triangleq \vartheta - \Theta(\varphi)=0$, where $\Theta(\varphi)$ is a solution to the equation
    \[
      F_{\perp}G\left(\Theta(\varphi),\varphi\right)-c_{1}\sin(2\varphi)-c_{2}\sin(4\varphi)-c_{3}\left(\varphi-\frac{\pi}{2}\right) = 0
    \]
    and $c_1, c_2, c_3$ are the VHC parameters. With the VHC imposed, that is, $h(q) \equiv 0$, the robot dynamics reduce to the scalar differential equation
    \begin{equation}
    \label{eq:reduced-dynamics}
      \alpha\left(\varphi\right)\ddot{\varphi}+\beta\left(\varphi\right)\dot{\varphi}^{2}+\gamma\left(\varphi\right)=0
    \end{equation}
    with the coefficients
    \begin{align*}
      \alpha\left(\varphi\right) & = F_{\perp}M\left(\Theta\left(\varphi\right),\varphi\right)\left[\begin{array}{c}
      \Theta'\\
      1
      \end{array}\right], \,\,
      \gamma\left(\varphi\right) = F_{\perp}G\left(\Theta\left(\varphi\right),\varphi\right), \\
      \beta\left(\varphi\right) & = F_{\perp}M\left(\Theta\left(\varphi\right),\varphi\right)\left[\begin{array}{c}
      \Theta''\\
      0
      \end{array}\right]\\
       & \quad +F_{\perp}C\left(\Theta\left(\varphi\right),\varphi,\Theta'\left(\varphi\right),1\right)\left[\begin{array}{c}
      \Theta'\\
      1
      \end{array}\right].
    \end{align*}

    The locations and types of the equilibrium points, and hence the qualitative features of the phase portrait of this equation, are determined by the parameters $c_i$. Numerical analysis shows that for $c_{1}=0.008$, $c_{2}=-0.013$ and $c_{3}=0.010$, the equation~\eqref{eq:reduced-dynamics} admits low-speed periodic solutions, which are consistent with the assumption of ideal rolling. Specifically, the solution $\varphi_*(t)$ originating from $\varphi_*(0) = 0$, $\dot\varphi_*(0) = 0$ reaches the point $\varphi_*\left(\frac{T}{2}\right) = \pi$, $\dot\varphi_*\left(\frac{T}{2}\right) = 0$ and returns to its initial state at time $T=8.50$~s. The corresponding periodic trajectory of the robot, $q_*(t) \equiv \left[\Theta\left(\varphi_*(t)\right), \varphi_*(t)\right]^{\top}$, $\dot q_*(t) \equiv \left[\Theta'\left(\varphi_*(t)\right), 1\right]^{\top} \dot\varphi_*(t)$, is illustrated in Fig.~\ref{fig:butterfly-reference-trajectory}.
    
  \subsection{Transverse Coordinates}
    The local coordinates $(\tau,\xi)$ for the considering trajectory are defined as
    \begin{equation}
    \label{eq:xi12}
      \xi_{1} \triangleq \vartheta-\Theta\left(\varphi\right),\quad
      \xi_{2} \triangleq \dot{\vartheta}-\Theta'\left(\varphi\right)\dot{\varphi}.
    \end{equation}
    As seen in Fig.~\ref{fig:butterfly-reference-trajectory}, the projection of $x_{*}(t)$ onto the phase plane $\left(\varphi,\dot{\varphi}\right)$ is nearly elliptical. Accordingly, the phase variable $\tau$ is naturally introduced as the polar angle measured clockwise
    \begin{equation}
    \label{eq:tau}
      \tau = \tau\left(\varphi,\dot{\varphi}\right) \triangleq \mathrm{arctan2}\left(-\dot{\varphi},\varphi-\frac{\pi}{2}\right).
    \end{equation}
    Numerical evaluation shows that the projection variable, computed along the reference trajectory as $\tau_*(t) \triangleq \tau\bigl(x_*(t)\bigr),$ is strictly increasing on the interval $[0, T)$ and varies from $-\pi$ to $\pi$. This property allows $\tau_*(t)$ to be continuously extended beyond a single period according to~\eqref{eq:extending-tau}, with $T_\tau = 2\pi$.
    The resulting extension enables the reparametrization
    \[
      \phi\left(\tau\right) \triangleq \varphi_{*}\left(\tau_{*}^{-1}\left(\tau\right)\right),\quad
      \dot{\phi}\left(\tau\right) \triangleq \dot{\varphi}_{*}\left(\tau_{*}^{-1}\left(\tau\right)\right)
    \]
    and thereby the transverse coordinate
    \begin{equation}
    \label{eq:xi3}
      \xi_{3} \triangleq \big(\varphi-\phi\left(\tau\right)\big)\cos\tau-\big(\dot{\varphi}-\dot{\phi}\left(\tau\right)\big)\sin\tau.
    \end{equation}

    Numerical analysis confirms that the transformation $x\mapsto(\tau,\xi)$, defined by~(\ref{eq:xi12},\ref{eq:tau},\ref{eq:xi3}), satisfies the transverse coordinate requirements outlined in Section~\ref{sub:transverse-dynamics}.

  \subsection{Feedback Design}
    We obtained the linearized transverse dynamics of the robot
    \[
      \frac{d\xi}{d\tau} = A\left(\tau\right)\xi+B\left(\tau\right)w
    \]
    by following the procedure described in Section~\ref{sec:Orbital-Stabilization-Approach}, using the control transformation~\eqref{eq:control-transform} with parameters $\nu_{1}=15$ and $\nu_{2}=6$.
    The monodromy matrix $\Psi$ was computed by numerically integrating the matrix initial value problem
    \[
      \frac{dX}{d\tau} = A\left(\tau\right)X,\quad X\left(0\right)=I_{3\times3}
    \]
    over the interval $\tau \in [0, 2\pi]$, so that $\Psi=X\left(2\pi\right)$.
    This matrix has three distinct eigenvalues
    \[
      \mu_{1,2} = (-3.31 \pm 7.66i) \times 10^{-12}, \quad
      \mu_3 = 1.0,
    \]
    two of which are inside the unit circle. This is consistent with Proposition~\ref{prop:stable-subspace} and allows the identification of the vector $n(\tau)$ that defines the stable subspace $\mathcal{S}_\tau$. The computed vector is shown in Fig.~\ref{fig:stable-subspace}.
    As seen, the function $b(\tau) = n^{\top}\!\left(\tau\right)B\left(\tau\right)$ has two simple zeros 
    on the interval $\left[0,2\pi\right]$, occurring at $\tau_{z}=2.98$
    and $\tau_{z}=6.12$. 

    The computations above demonstrate that the transverse linearization of the robot dynamics satisfies the conditions of Theorem~\ref{thm:ltv-sliding-feedback}. This justifies the use of the control law~(\ref{eq:sliding-feedback}, \ref{eq:nonlinear-control}) for stabilization of the considering trajectory. The feedback gains were set to $k_1 = 8.0$ and $k_2 = 0.5$ and applied in experiments on the real robot.

    The results of the experimental validation are presented in Figs.~\ref{fig:transverse-transient} and~\ref{fig:phase-coordinates-transient}.
    The transverse coordinates $\xi$ remain bounded but do not completely vanish, which indicates the presence of parasitic dynamics, including sensor delays, unmodeled motor dynamics, and non-ideal ball rolling behavior.
    The sliding variable $s$ stays close to zero whenever $b(\tau) \ne 0$, but can diverge near $\tau_z$, where $b(\tau_z)$ vanishes. The control signal $u$ exhibits chattering, see \cite{Utkin1992}, when $s$ stays in the vicinity of zero, as it attempts to maintain the system on the sliding surface $\mathcal{S}_\tau$.

    The projection of the closed-loop system trajectory onto the $(\varphi,\vartheta)$ and $(\varphi,\dot\varphi)$ planes is shown in Fig.~\ref{fig:phase-coordinates-transient}. As observed, the virtual constraint is well tracked, except near the endpoints where the ball's speed is low and unmodeled frictional effects become more significant. The quality of velocity tracking is noticeably worse, which can be attributed to the presence of parasitic dynamics.

    An accompanying video of the experiment can be viewed at~\url{https://youtu.be/hx9mxDxHBB0}.

    \begin{figure}\begin{centering}
      \includegraphics[width=0.99\linewidth]{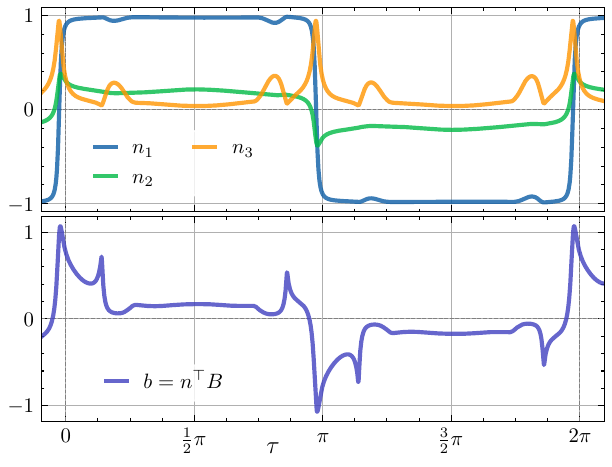}
      \par
      \caption{
        Components of the vector $n\left(\tau\right)$ defining the stable subspace $\mathcal{S}_{\tau}$, 
        and the coefficient $b(\tau) \equiv n^{\top}\!\left(\tau\right)B\left(\tau\right)$.
      }
      \label{fig:stable-subspace}
    \end{centering}\end{figure}
    
    \begin{figure}\begin{centering}
      \includegraphics[width=0.99\linewidth]{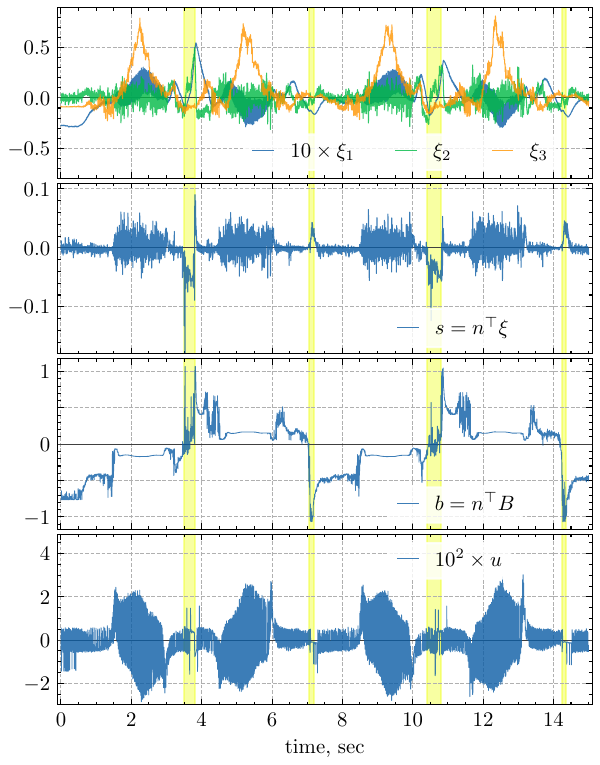}
      \par
      \caption{
        Transient behavior of the transverse coordinates $\xi$, sliding variable $s$, coefficient $b$, and control signal $u$. Regions where $b$ is close to zero -- and thus the sliding variable dynamics are uncontrollable -- are highlighted in yellow.
      }
      \label{fig:transverse-transient}
    \end{centering}\end{figure}
    
    \begin{figure}\begin{centering}
      \includegraphics[width=0.99\linewidth]{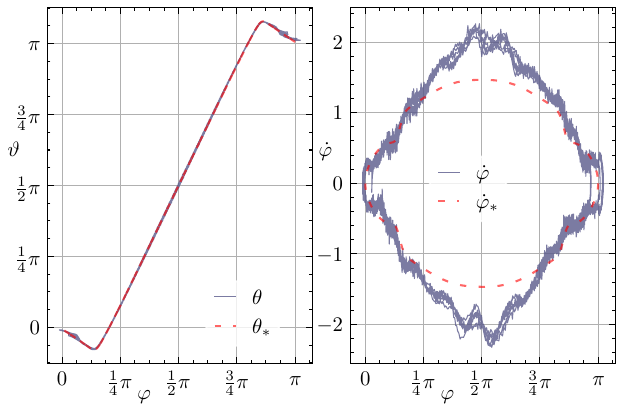}
      \par
      \caption{
        Projection of the closed-loop system trajectory, measured in a real experiment, onto the phase-space planes.
      }
      \label{fig:phase-coordinates-transient}
    \end{centering}\end{figure}

  \section{Concluding Remarks}
    This work introduced a new approach for stabilizing periodic orbits in mechanical systems with two degrees of freedom and one control input. The method combines subspace stabilization through partial feedback linearization, followed by PD-like feedback, with sliding-mode control. As with any sliding-mode-based feedback, the approach theoretically offers finite-time convergence of certain dynamics and improved tracking performance by compensating matched disturbances via high-frequency switching.

    In experiments, stronger feedback was limited by unilateral mechanical constraints, which restricted the maximum motor torque. Additional factors such as unmodeled dissipative forces, discretization effects, and errors in state estimation (due to vision and numerical differentiation) also affected performance. These limitations prevented achieving the ideal theoretical results. Nevertheless, the experiments confirm the feasibility of the approach.

    The method admits a natural extension to systems with $n$ degrees of freedom and $m = n - 1$ control inputs. This is particularly appealing, as it avoids solving computationally demanding LQR problems for high-dimensional LTV systems. On the other hand, the approach is less general, as it exploits specific structural properties of the dynamics. In this light, extending the approach to systems with $m < n-1$ control inputs remains an open question for future research.

  \appendix
  \section{Proof of Proposition~\ref{prop:stability-of-y}
  \label{sec:proof-stability-of-y}}
    Direct differentiation of
    $y\left(x\right)$ yields
    \[
      \dot{y}_{1}=\frac{\partial h}{\partial q}\dot{q}=L_{\dot q}h=y_{2},\qquad\dot{y}_{2}=
      \frac{\partial^2 h}{\partial q^2}\dot{q}^2+\frac{\partial h}{\partial q}\ddot{q}.
    \]
    Substituting the expression for $\ddot{q}$ from~\eqref{eq:mechanical-system}
    and control transform~\eqref{eq:control-transform} evaluated at
    $w=0$, we obtain
    \[
      \dot{y}=\left(\begin{array}{cc}
      0 & 1\\
      -\nu_{1} & -\nu_{2}
      \end{array}\right)y.
    \]
    Since $\nu_{1,2}>0$, the eigenvalues of the above matrix have strictly
    negative real parts. Therefore, the system is exponentially stable.
    \qed

  \section{Proof of Proposition~\ref{prop:stable-subspace}}
  \label{sec:appendix-prop-2}
    The main idea of the proof
    is to show that the function $V_{y}$, evaluated along solutions of
    the equation $\dot{x}=f\left(x\right)$, decreases after one period.
    This, in turn, imposes constraints on the Floquet multipliers of the
    linearized transverse dynamics. To make this argument explicit, we
    express the function $y$ in the transverse coordinates via
    \[
      \zeta\left(\xi,\tau\right) \triangleq y\left(x\left(\xi,\tau\right)\right).
    \]
    Because $y\left(x\right)=0$ for all $x\in x_{*}$ and $y\in C^{1}$,
    Hadamard’s lemma guarantees that $\zeta$ vanishes to first order
    along the orbit. Therefore, it admits the representation:
    \begin{align*}
      \zeta\left(\xi,\tau\right) &= J\left(\tau\right)\xi+O\left(\xi^{2}\right)\in\mathbb{R}^{2},\quad\text{where } \\
      J\left(\tau\right) & \triangleq \left.\frac{\partial y\left(x\right)}{\partial x}\right|_{x=x\left(0,\tau\right)}\left.\frac{\partial x\left(\xi,\tau\right)}{\partial\xi}\right|_{\xi=0}\in\mathbb{R}^{2\times3}.\nonumber 
    \end{align*}
    We claim, that $\mathrm{rank}\,J\left(\tau\right)=2$ for all $\tau$.
    Suppose, to the contrary, that for some $\tau$, $\mathrm{rank}\,J\left(\tau\right)\leq1$.
    Then there exist at least two linearly independent vectors $c_{1},$$c_{2}\in\mathbb{R}^{3}$,
    vanishing $J$. This can equivalently be written as 
    \[
      J\left(\tau\right)c_{i}=\left.\frac{\partial\zeta\left(\xi,\tau\right)}{\partial\left(\xi,\tau\right)}\right|_{\xi=0}\left(\begin{array}{c}
      c_{i}\\
      0
      \end{array}\right)=0.
    \]
    By the properties of $y$, we have $y\left(x\right)=0$ for all $x\in x_{*}$,
    and therefore $\left.\frac{\partial\zeta\left(\xi,\tau\right)}{\partial\tau}\right|_{\xi=0}=\frac{dy\left(x\left(0,\tau\right)\right)}{d\tau}=0.$
    Consequently, the vector $\left(0,0,0,1\right)^{\top}$ is also in
    the kernel of $\left.\frac{\partial\zeta\left(\xi,\tau\right)}{\partial\left(\xi,\tau\right)}\right|_{\xi=0}$.
    Since the transformation $x=x\left(\xi,\tau\right)$ is a diffeomorphism
    within $U_{\varepsilon}$, the matrix $\left.\frac{\partial x\left(\xi,\tau\right)}{\partial\left(\xi,\tau\right)}\right|_{\xi=0}$
    is of full rank four. Therefore, the above would imply that $\left.\frac{\partial y}{\partial x}\right|_{x=x\left(\tau,0\right)}$
    has at least three linearly independent right annihilators:
    \begin{align*}
      \left.\frac{\partial x}{\partial\left(\xi,\tau\right)}\right|_{\xi=0}\left(\begin{array}{c}
      c_{1}\\
      0
      \end{array}\right),\quad\left.\frac{\partial x}{\partial\left(\xi,\tau\right)}\right|_{\xi=0}\left(\begin{array}{c}
      c_{2}\\
      0
      \end{array}\right),\\
      \quad\left.\frac{\partial x}{\partial\left(\xi,\tau\right)}\right|_{\xi=0}\left(\begin{array}{c}
      0_{3}\\
      1
      \end{array}\right),
    \end{align*}
    which would force its rank to be at most one. However, by definition,
    \[
      \left.\frac{\partial y}{\partial x}\right|_{x=x\left(0,\tau\right)}=\left(\begin{array}{cc}
      \frac{\partial h(q)}{\partial q} & 0\\
      L_{\dot{q}}h(q) & \frac{\partial h(q)}{\partial q}
      \end{array}\right)_{x=x\left(0,\tau\right)},
    \]
    and the two rows are linearly independent since $\frac{\partial h(q)}{\partial q}\ne0$
    for all $q\in q_{*}$. This contradiction shows that $\mathrm{rank}J\left(\tau\right)=2$. 
    
    Next, use Proposition~\ref{prop:stability-of-y} in terms of variable
    $\zeta$. Consider a solution $x(t)$ of the equation $\dot{x}=f\left(x\right)$,
    originating in $x_{o}\in U_{\delta}$, which corresponds to $\tau_{o}=\tau\left(x_{o}\right)$
    and $\xi_{o}=\xi\left(x_{o}\right)$. For a sufficiently small $\delta>0$,
    the solution remains in $U_{\varepsilon}$ and within a finite interval
    $\Delta t$ achieves a point $x_{e}$, at which $\tau_{e}=\tau_{o}+T_{\tau}$
    and some $\xi=\xi_{e}$. In these points, the function $V_{y}$ is
    computed as 
    \begin{align}
    \label{eq:vy-for-ltv}
      V_{y}\left(x_{o}\right) &= \xi_{o}^{\top}J^{\top}PJ\xi_{o}+O\left(\xi_{o}^{3}\right),\nonumber \\
      V_{y}\left(x_{e}\right) &= \xi_{e}^{\top}J^{\top}PJ\xi_{e}+O\left(\xi_{e}^{3}\right),
    \end{align}
    where $J=J\left(\tau_{o}\right)=J\left(\tau_{e}\right)$, due to $x\left(0,\tau_{o}\right)=x\left(0,\tau_{o}+T_{\tau}\right)=x\left(0,\tau_{e}\right)$.
    The value $\xi_{e}$ is found from the monodromy matrix $\Psi$ of
    the linearized transverse dynamics $\frac{d\xi}{d\tau}=A\left(\tau\right)\xi$ as 
    \[
      \xi_{e}=\Psi\xi_{o}+O\left(\xi_{o}^{2}\right).
    \]
    From Proposition~\ref{prop:stability-of-y}, we obtain the relation 
    \begin{equation}
    \label{eq:inequlity-for-V}
      V_{y}\left(x_{e}\right) \leq \exp\left(-\alpha\Delta t\right)V_{y}\left(x_{o}\right),
    \end{equation}
    By the theorem on the continuous dependence of solutions on initial conditions, we have $\Delta t = T + O(\xi_o)$. 
    Since inequality~\eqref{eq:inequlity-for-V} holds for all $x_o$, including points arbitrarily close to the orbit (where $\xi_o$ is arbitrarily small), substituting the relations~\eqref{eq:vy-for-ltv} leads to the conclusion that the matrix $\Psi$ must satisfy
    \[
      \xi_{o}^{\top}\Psi^{\top}J^{\top}PJ\Psi\xi_{o}\leq\exp\left(-\alpha T\right)\xi_{o}^{\top}J^{\top}PJ\xi_{o} \quad \forall \, \xi_o \in \mathbb{R}^3.
    \]
    The matrices $\Psi^{\top}J^{\top}PJ\Psi$ and $J^{\top}PJ$ are symmetric and positive semi-definite. Therefore, the inequality can be extended to complex vectors as
    \[
      z^{\dagger}\Psi^{\top}J^{\top}PJ\Psi z\leq\exp\left(-\alpha T\right)z^{\dagger}J^{\top}PJz\quad\forall\,z\in\mathbb{C}^{3},
    \]
    where $z^{\dagger}$ denotes the conjugate transpose of $z$. Substituting $z$ with the eigenvectors $l_i$ of the matrix $\Psi$ and noting that $\Psi$ is nondefective, the following cases may occur:
    \begin{enumerate}
      \item 
        Vector $l_{i}$ lies in the kernel of $J$. In this case, $l_{i}$ is real, and the corresponding eigenvalue $\mu_{i}\in\mathbb{R}$. A complex $l_{i}$ would violate the nondefective condition. Since $\mathrm{rank}\,J=2$, this case can occur for at most one real eigenvector.
      \item 
        Vector $l_{i}$ is not in the kernel of $J$. Then 
        \[
          |\mu_{i}|^{2} \leq \exp\left(-\alpha T\right)<1\quad\Rightarrow\quad\left|\mu_{i}\right|<1.
        \]
        This case applies to at least two eigenvectors. Consequently, there are at least two stable Floquet multipliers, $\mu_{1}$ and $\mu_{2}$.
    \end{enumerate}
    \qed

 \section{Proof of Proposition~\ref{prop:definition-of-n}}
 \label{proof:definition-of-n}
    The existence and differentiability of $n(\tau)$ follow directly from the properties of the linearly independent solutions $e_1(\tau)$ and $e_2(\tau)$.

    Next, we show that any differentiable function $n(\tau)$ satisfying 
    \[
      n^\top\!(\tau) e_1(\tau) = 0, \, n^\top\!(\tau) e_2(\tau) = 0, \, n^\top\!(\tau) n(\tau) = 1 \, \forall \tau
    \]
    must satisfy equation~\eqref{eq:ode-for-n}. Let $f \triangleq \frac{dn}{d\tau}$. Differentiating $n^\top\!(\tau) e_i(\tau) = 0$ with respect to $\tau$ and substituting $\frac{de_i(\tau)}{d\tau} = A(\tau) e_i(\tau)$, we obtain 
    \[
      e^{\top}_i\!(\tau) f = -e^{\top}_i\!(\tau) A^{\top}\!(\tau)\,n(\tau).
    \]
    Hence, the orthogonal projection of $f$ onto the subspace spanned by $e_1, e_2$ is equal to $-e_i^{\top}\!(\tau) A^{\top}\!(\tau)\,n(\tau)$, and therefore $f$ admits the decomposition
    \[
      f = -A^{\top}\!(\tau)\, n(\tau) + a \, n(\tau),
    \]
    for some scalar function $a$. Imposing $n^\top\!(\tau) n(\tau) = 1$ and differentiating yields $a = n^{\top}(\tau)A^{\top}(\tau)n(\tau)$. Substituting this expression into the decomposition above, we obtain
    \[
      \frac{dn}{d\tau} = -A^{\top}\!(\tau)n + (n^{\top}A^{\top}\!(\tau)n) n = -\left(I-nn^{\top}\right)A^{\top}\!(\tau)n,
    \]
    as claimed. \qed

  \bibliographystyle{unsrt}
  \bibliography{refs}

@article{Lynch1999,
  author    = {Lynch, K. and Mason, M.},
  title     = {Dynamic nonprehensile manipulation: {C}ontrollability, planning, and experiments},
  journal   = {The International Journal of Robotics Research},
  volume    = {18},
  pages     = {64--92},
  year      = {1999}
}

@article{Leonov1995,
  author    = {Leonov, G. A. and Ponomarenko, D. V. and Smirnova, V. B.},
  title     = {Local instability and localization of attractors: {F}rom stochastic generator to {C}hua's systems},
  journal   = {Acta Applicandae Mathematicae},
  volume    = {40},
  pages     = {179--243},
  year      = {1995}
}

@misc{FreidovichGusev2018,
  author    = {Freydovich, L. B. and Gusev, S. V.},
  title     = {Method, System and Computer Program for Controlling Dynamic Manipulations by a Robot},
  howpublished = {Swedish Patent SE1850676-6},
  year      = {2018},
  note      = {Assignee: RobotikUm AB. Available at \url{https://tc.prv.se/spd/pdf/meqzeHiuJEw3aBxrcFfOrA/SE1850676.A1.pdf}}
}

@book{Urabe1967,
  author    = {Urabe, M.},
  title     = {Nonlinear Autonomous Oscillations},
  publisher = {Academic Press},
  address   = {New York},
  year      = {1967}
}

@misc{Beghin1922,
  author    = {Béghin, H.},
  title     = {Étude théorique des compas gyrostatiques {A}nschütz et {S}perry},
  year      = {1922}
}

@inproceedings{Aoustin1999,
  author    = {Aoustin, Y. and Formal'sky, A.},
  title     = {Design of reference trajectory to stabilize desired nominal cyclic gait of a biped},
  booktitle = {Proc. of the First Workshop on Robot Motion and Control},
  address   = {Kiekrz, Poland},
  pages     = {159--164},
  year      = {1999}
}

@article{Shiriaev2010,
  author    = {Shiriaev, A. S. and Freidovich, L. B. and Gusev, S. V.},
  title     = {Transverse linearization for controlled mechanical systems with several passive degrees of freedom},
  journal   = {IEEE Transactions on Automatic Control},
  volume    = {55},
  number    = {4},
  pages     = {893--906},
  year      = {2010}
}

@book{Khalil2002,
  author={Khalil, H. K.},
  title={Nonlinear Systems, 3rd ed.},
  publisher={Upper Saddle River, NJ, USA: Prentice Hall},
  year={2002},
}

@inproceedings{Canudas2002,
  author    = {Canudas-de-Wit, C. and Espiau, B. and Urrea, C.},
  title     = {Orbital Stabilization of Underactuated Mechanical Systems},
  booktitle = {Proc. of the 15th IFAC World Congress},
  address   = {Barcelona, Spain},
  pages     = {527--532},
  year      = {2002},
  publisher = {IFAC}
}

@article{Demidovich1968,
  author = {Demidovich, B. P.},
  title = {The orbital stability of bounded solutions of an autonomous system. II},
  journal={Differentsial'nye Uravneniya},
  year={1968},
  volume = {4},
  number = {8},
  pages = {1359--1373}, 
}

@article{Leonov1998,
	author={Leonov, G.A.},
	title={On stability in the first approximation},
	journal={Journal of Applied Mathematics and Mechanics},
	year={1998},
	volume={62},
	issue={4},
	pages={511--517},
}

@article{Shiriaev2005,
  author={Shiriaev, A. and Perram, J.W. and Canudas-de-Wit, C.},
  title={Constructive tool for orbital stabilization of underactuated nonlinear systems: {V}irtual constraints approach}, 
  journal={IEEE Transactions on Automatic Control}, 
  year={2005},
  volume={50},
  number={8},
  pages={1164-1176},
}

@book{Utkin1992,
  author    = {Utkin, V. I.},
  title     = {Sliding Modes in Control and Optimization},
  series    = {Communications and Control Engineering},
  publisher = {Springer},
  address   = {Berlin, Heidelberg},
  year      = {1992},
}

@article{Saetre2021,
  author={Sætre, C.F. and Shiriaev, A.S. and Freidovich, L.B. and Gusev, S.V. and Fridman, L. M.},
  title={Robust orbital stabilization: A {F}loquet theory--based approach},
  journal={International Journal of Robust and Nonlinear Control},
  volume    = {31},
  number    = {16},
  pages     = {8075--8108},
  year      = {2021},
  publisher = {John Wiley and Sons}
}

@inproceedings{Finet2015,
  author    = {Finet, Sylvain and Praly, Laurent},
  title     = {Feedback Linearization of the Transverse Dynamics for a Class of One Degree Underactuated Systems},
  booktitle = {Proc. of the 54th IEEE Conference on Decision and Control},
  address   = {Osaka, Japan},
  pages     = {7802--7807},
  year      = {2015},
}

@inproceedings{Ahmed2013,
  author    = {Ahmed, Mariam and Hably, Ahmad and Bacha, Seddik},
  title     = {Kite Generator System Periodic Motion Planning via Virtual Constraints},
  booktitle = {Proc. of the 39th Annual Conference of the IEEE Industrial Electronics Society},
  address   = {Vienna, Austria},
  pages     = {1694--1699},
  year      = {2013},
  publisher = {IEEE},
}

@article{Freidovich2008,
  author = {L. Freidovich and A. Robertsson and A. Shiriaev and R. Johansson},
  title = {Periodic motions of the {P}endubot via virtual holonomic constraints: Theory and experiments},
  journal = {Automatica},
  volume = {44},
  number = {3},
  pages = {785--791},
  year = {2008},
  issn = {0005-1098},
}

@inproceedings{Surov2015,
  author    = {Surov, M. and Shiriaev, A. and Freidovich, L. and Gusev, S. and Paramonov, L.},
  title     = {Case Study in Non-Prehensile Manipulation: Planning and Orbital Stabilization of One-Directional Rollings for the “{B}utterfly” Robot},
  booktitle = {Proc. of the 2015 IEEE International Conference on Robotics and Automation},
  address   = {Seattle, WA, USA},
  pages     = {1484--1489},
  year      = {2015},
  publisher = {IEEE},
}

@article{Kant2020,
  author = {Nilay Kant and Ranjan Mukherjee},
  title = {Orbital Stabilization of Underactuated Systems using Virtual Holonomic Constraints and Impulse Controlled {P}oincaré Maps},
  journal = {Systems and Control Letters},
  volume = {146},
  pages = {104813},
  year = {2020},
  issn = {0167-6911},
}

@article{Maggiore2013,
  title={Virtual Holonomic Constraints for {E}uler–{L}agrange Systems}, 
  author={Maggiore, M. and Consolini, L.},
  journal={IEEE Transactions on Automatic Control}, 
  year={2013},
  volume={58},
  number={4},
  pages={1001-1008},
}

@article{Banaszuk1995,
  author = {Andrzej Banaszuk and John Hauser},
  title = {Feedback linearization of transverse dynamics for periodic orbits},
  journal = {Systems and Control Letters},
  volume = {26},
  number = {2},
  pages = {95-105},
  year = {1995},
}

@inproceedings{Lynch1998,
  author    = {Lynch, K. M. and Shiroma, N. and Arai, H. and Tanie, K.},
  title     = {The Roles of Shape and Motion in Dynamic Manipulation: The {B}utterfly Example},
  booktitle = {Proc. of the 1998 IEEE International Conference on Robotics and Automation},
  pages     = {1958--1963},
  year      = {1998},
  address   = {Leuven, Belgium},
  publisher = {IEEE},
}

@article{Shiriaev2006,
  author = {A. Shiriaev and A. Robertsson and J. Perram and A. Sandberg},
  title = {Periodic motion planning for virtually constrained {E}uler–{L}agrange systems},
  journal = {Systems and Control Letters},
  volume = {55},
  number = {11},
  pages = {900--907},
  year = {2006},
}

@article{LaHera2009,
  author = {Pedro X. {La Hera} and Leonid B. Freidovich and Anton S. Shiriaev and Uwe Mettin},
  title = {New approach for swinging up the {F}uruta pendulum: Theory and experiments},
  journal = {Mechatronics},
  volume = {19},
  number = {8},
  pages = {1240--1250},
  year = {2009},
  issn = {0957-4158},
}

@book{Yakubovich1975,
  author    = {Yakubovich, V. A. and Starzhinskii, V. M.},
  title     = {Linear Differential Equations with Periodic Coefficients},
  publisher = {Wiley},
  address   = {New York},
  year      = {1975},
  note      = {2 volumes}
}

@inproceedings{Surov2020,
  author    = {Surov, Maksim and Gusev, Sergei and Freidovich, Leonid},
  title     = {Constructing Transverse Coordinates for Orbital Stabilization of Periodic Trajectories},
  booktitle = {Proc. of the 2020 American Control Conference},
  address   = {Denver, CO, USA},
  pages     = {836--841},
  year      = {2020},
  publisher = {IEEE},
}

@article{Manchester2011,
  author = {Ian R Manchester and Uwe Mettin and Fumiya Iida and Russ Tedrake},
  title ={Stable dynamic walking over uneven terrain},
  journal = {The International Journal of Robotics Research},
  volume = {30},
  number = {3},
  pages = {265--279},
  year = {2011},
}
\end{document}